\newtheorem{theorem}{Theorem}
\begin{document}
 
\title{GraphGAN: Graph Representation Learning with Generative Adversarial Nets}

\author{Hongwei Wang\textsuperscript{1,2}, Jia Wang\textsuperscript{3}, Jialin Wang\textsuperscript{4,3}, Miao Zhao\textsuperscript{3},\\
\bf \Large  Weinan Zhang\textsuperscript{1}, Fuzheng Zhang\textsuperscript{2}, Xing Xie\textsuperscript{2} and Minyi Guo\textsuperscript{1}\thanks{M. Guo is the corresponding author.}\\
\textsuperscript{1}Shanghai Jiao Tong University, wanghongwei55@gmail.com, \{wnzhang, myguo\}@sjtu.edu.cn\\
\textsuperscript{2}Microsoft Research Asia, \{fuzzhang, xing.xie\}@microsoft.com\\
\textsuperscript{3}The Hong Kong Polytechnic University, \{csjiawang, csmiaozhao\}@comp.polyu.edu.hk\\
\textsuperscript{4}Huazhong University of Science and Technology, wangjialin@hust.edu.cn
}

\maketitle

\begin{abstract}
	The goal of graph representation learning is to embed each vertex in a graph into a low-dimensional vector space.
	Existing graph representation learning methods can be classified into two categories: \textit{generative} models that learn the underlying connectivity distribution in the graph, and \textit{discriminative} models that predict the probability of edge existence between a pair of vertices.
	In this paper, we propose \textit{GraphGAN}, an innovative graph representation learning framework unifying above two classes of methods, in which the generative model and discriminative model play a game-theoretical \textit{minimax} game.
	Specifically, for a given vertex, the generative model tries to fit its underlying true connectivity distribution over all other vertices and produces ``fake'' samples to fool the discriminative model, while the discriminative model tries to detect whether the sampled vertex is from ground truth or generated by the generative model.
	With the competition between these two models, both of them can alternately and iteratively boost their performance.
	Moreover, when considering the implementation of generative model, we propose a novel \textit{graph softmax} to overcome the limitations of traditional softmax function, which can be proven satisfying desirable properties of \textit{normalization}, \textit{graph structure awareness}, and \textit{computational efficiency}.
	Through extensive experiments on real-world datasets, we demonstrate that GraphGAN achieves substantial gains in a variety of applications, including link prediction, node classification, and recommendation, over state-of-the-art baselines.
\end{abstract}

\section{Introduction}
	\textit{Graph representation learning}, also known as \textit{network embedding}, aims to represent each vertex in a graph (network) as a low-dimensional vector, which could facilitate tasks of network analysis and prediction over vertices and edges.
	Learned embeddings are capable to benefit a wide range of real-world applications such as link prediction \cite{gao2011temporal}, node classification \cite{tang2016node}, recommendation \cite{yu2014personalized}, visualization \cite{maaten2008visualizing}, knowledge graph representation \cite{lin2015learning}, clustering \cite{tian2014learning}, text embedding \cite{tang2015pte}, and social network analysis \cite{liu2016aligning}.
	Recently, researchers have examined applying representation learning methods to various types of graphs, such as weighted graphs \cite{grover2016node2vec}, directed graphs \cite{zhou2017scalable}, signed graphs \cite{wang2017signed}, heterogeneous graphs \cite{wang2018shine}, and attributed graphs \cite{huang2017label}.
	In addition, several prior works also try to preserve specific properties during the learning process, such as global structures \cite{wang2016structural}, community structures \cite{wang2017community}, group information \cite{chen2016incorporate}, and asymmetric transitivity \cite{ou2016asymmetric}.
	
	Arguably, most existing methods of graph representation learning can be classified into two categories.
	The first is \textit{generative} graph representation learning model \cite{perozzi2014deepwalk,grover2016node2vec,zhou2017scalable,dong2017metapath2vec,li2017semi}.
	Similar to classic generative models such as Gaussian Mixture Model \cite{lindsay1995mixture} or Latent Dirichlet Allocation \cite{blei2003latent}, generative graph representation learning models assume that, for each vertex $v_c$, there exists an underlying true connectivity distribution $p_{\rm{true}} (v | v_c)$, which implies $v_c$'s connectivity preference (or relevance distribution) over all other vertices in the graph.
	The edges in the graph can thus be viewed as observed samples generated by these conditional distributions, and these generative models learn vertex embeddings by maximizing the likelihood of edges in the graph.
	For example, DeepWalk \cite{perozzi2014deepwalk} uses random walk to sample ``context'' vertices for each vertex, and tries to maximize the log-likelihood of observing context vertices for the given vertex.
	Node2vec \cite{grover2016node2vec} further extends the idea by proposing a biased random walk procedure, which provides more flexibility when generating the context for a given vertex.
	
	The second kind of graph representation learning method is the \textit{discriminative} model \cite{wang2018shine,cao2016deep,wang2016structural,li2017ppne}.
	Different from generative models, discriminative graph representation learning models do not treat edges as generated from an underlying conditional distribution, but aim to learn a classifier for predicting the existence of edges directly.
	Typically, discriminative models consider two vertices $v_i$ and $v_j$ jointly as features, and predict the probability of an edge existing between the two vertices, i.e., $p \big( edge | (v_i, v_j) \big)$, based on the training data in the graph.
	For instance, SDNE \cite{wang2016structural} uses the sparse adjacency vector of vertices as raw features for each vertex, and applies an autoencoder to extract short and condense features for vertices under the supervision of edge existence.
	PPNE \cite{li2017ppne} directly learns vertex embeddings with supervised learning on positive samples (connected vertex pairs) and negative samples (disconnected vertex pairs), also preserving the inherent properties of vertices during the learning process.
	
	Although generative and discriminative models are generally two disjoint classes of graph representation learning methods, they can be considered two sides of the same coin \cite{wang2017irgan}.
	In fact, LINE \cite{tang2015line} has done a preliminary trial on implicitly combining these two objectives (the first-order and second-order proximity, as called in LINE).
	Recently, \textit{Generative Adversarial Nets} (GAN) \cite{goodfellow2014generative} have received a great deal of attention.
	By designing a game-theoretical \textit{minimax} game to combine generative and discriminative models, GAN and its variants achieve success in various applications, such as image generation \cite{denton2015deep}, sequence generation \cite{yu2017seqgan}, dialogue generation \cite{li2017adversarial}, information retrieval \cite{wang2017irgan}, and domain adaption \cite{zhang2017aspect}.
	
	Inspired by GAN, in this paper we propose \textit{GraphGAN}, a novel framework that unifies generative and discriminative thinking for graph representation learning.
	Specifically, we aim to train two models during the learning process of GraphGAN:
	1) Generator $G(v | v_c)$, which tries to fit the underlying true connectivity distribution $p_{\rm{true}} (v | v_c)$ as much as possible, and generates the most likely vertices to be connected with $v_c$;
	2) Discriminator $D(v, v_c)$, which tries to distinguish well-connected vertex pairs from ill-connected ones, and calculates the probability of whether an edge exists between $v$ and $v_c$.
	In the proposed GraphGAN, the generator $G$ and the discriminator $D$ act as two players in a \textit{minimax} game:
	the generator tries to produce the most indistinguishable ``fake'' vertices under guidance provided by the discriminator, while the discriminator tries to draw a clear line between the ground truth and ``counterfeits'' to avoid being fooled by the generator.
	Competition in this game drives both of them to improve their capability, until the generator is indistinguishable from the true connectivity distribution.
	
	Under the GraphGAN framework, we study the choices of generator and discriminator.
	Unfortunately, we find that the traditional softmax function (and its variants) is not suitable for the generator for two reasons:
	1) softmax treats all other vertices in the graph equally for a given vertex, lacking the consideration on graph structure and proximity information;
	2) the calculation of softmax involves all vertices in the graph, which is time-consuming and computationally inefficient.
	To overcome these limitations, in GraphGAN we propose a new implementation of generator called \textit{Graph Softmax}.
	Graph softmax provides a new definition of connectivity distribution in a graph.
	We prove graph softmax satisfying desirable properties of \textit{normalization}, \textit{graph structure awareness}, and \textit{computational efficiency}.
	Accordingly, we propose a random-walk-based online generating strategy for generator, which is consistent with the definition of graph softmax and can greatly reduce computation complexity.
	
	Empirically, we apply GraphGAN to three real-world scenarios, i.e., link prediction, node classification, and recommendation, using five real-world graph-structured datasets.
	The experiment results show that GraphGAN achieves substantial gains compared with state-of-the-art baselines in the field of graph representation learning.
	Specifically, GraphGAN outperforms baselines by $0.59\%$ to $11.13\%$ in link prediction and by $0.95\%$ to $21.71\%$ in node classification both on Accuracy.
	Additionally, GraphGAN improves Precision@20 by at least $38.56\%$ and Recall@20 by at least $52.33\%$ in recommendation.
	We attribute the superiority of GraphGAN to its \textit{unified adversarial learning framework} as well as the design of the \textit{proximity-aware graph softmax} that naturally captures structural information from graphs.

\section{Graph Generative Adversarial Nets}
	In this section, we introduce the framework of GraphGAN and discuss the details of implementation and optimization of the generator and the discriminator.
	We then present the graph softmax implemented as the generator, and prove its superior properties over the traditional softmax function.

	\subsection{GraphGAN Framework}
		We formulate the generative adversarial nets for graph representation learning as follows.
		Let $\mathcal G = (\mathcal V, \mathcal E)$ be a given graph, where $\mathcal V = \{ v_1, ..., v_V \}$ represents the set of vertices and $\mathcal E = \{ e_{ij} \}_{i, j = 1}^{V}$ represents the set of edges.
		For a given vertex $v_c$, we define $\mathcal N(v_c)$ as the set of vertices directly connected to $v_c$, the size of which is typically much smaller than the total number of vertices $V$.
		We denote the underlying true connectivity distribution for vertex $v_c$ as conditional probability $p_{\rm{true}} (v | v_c)$, which reflects $v_c$'s connectivity preference distribution over all other vertices in $\mathcal V$.
		From this point of view, $\mathcal N(v_c)$ can be seen as a set of observed samples drawn from $p_{\rm{true}} (v | v_c)$.
		Given the graph $\mathcal G$, we aim to learn the following two models:
	
		\textbf{Generator $G(v | v_c; \theta_G)$}, which tries to approximate the underlying true connectivity distribution $p_{\rm{true}} (v | v_c)$, and generates (or selects, if more precise) the most likely vertices to be connected with $v_c$ from vertex set $\mathcal V$.
	
		\textbf{Discriminator $D(v, v_c; \theta_D)$}, which aims to discriminate the connectivity for the vertex pair $(v, v_c)$.
		$D(v, v_c; \theta_D)$ outputs a single scalar representing the probability of an edge existing between $v$ and $v_c$.
	
		Generator $G$ and discriminator $D$ act as two opponents: generator $G$ would try to fit $p_{\rm{true}} (v | v_c)$ perfectly and generate relevant vertices similar to $v_c$'s real immediate neighbors to deceive the discriminator, while discriminator $D$, on the contrary, would try to detect whether these vertices are ground-truth neighbors of $v_c$ or the ones generated by its counterpart $G$.
		Formally, $G$ and $D$ are playing the following two-player \textit{minimax} game with value function $V(G, D)$:
		\begin{equation}
		\label{eq:minimax}
			\begin{split}
				\min_{\theta_G} &\max_{\theta_D} V(G, D) = \sum_{c=1}^V \Big(\mathbb E_{v \sim p_{\rm{true}} (\cdot | v_c)} \big[ \log D(v, v_c; \theta_D) \big]\\
				&+ \mathbb E_{v \sim G(\cdot | v_c; \theta_G)} \big[ \log \big(1 - D(v, v_c; \theta_D) \big) \big] \Big).
			\end{split}
		\end{equation}
	
		Based on Eq. (\ref{eq:minimax}), the optimal parameters of the generator and the discriminator can be learned by alternately maximizing and minimizing the value function $V(G, D)$.
		The GraphGAN framework is illustrated as shown in Figure \ref{fig:framework}.
		In each iteration, discriminator $D$ is trained with positive samples from $p_{\rm{true}} (\cdot | v_c)$ (vertices in green) and negative samples from generator $G(\cdot | v_c; \theta_G)$ (vertices with blue stripes), and generator $G$ is updated with policy gradient under the guidance of $D$ (detailed later in this section).
		Competition between $G$ and $D$ drives both of them to improve their methods until $G$ is indistinguishable from the true connectivity distribution.
		We discuss the implementation and optimization of $D$ and $G$ as follows.
	
		\begin{figure}
			\centering
			\includegraphics[width=0.45\textwidth]{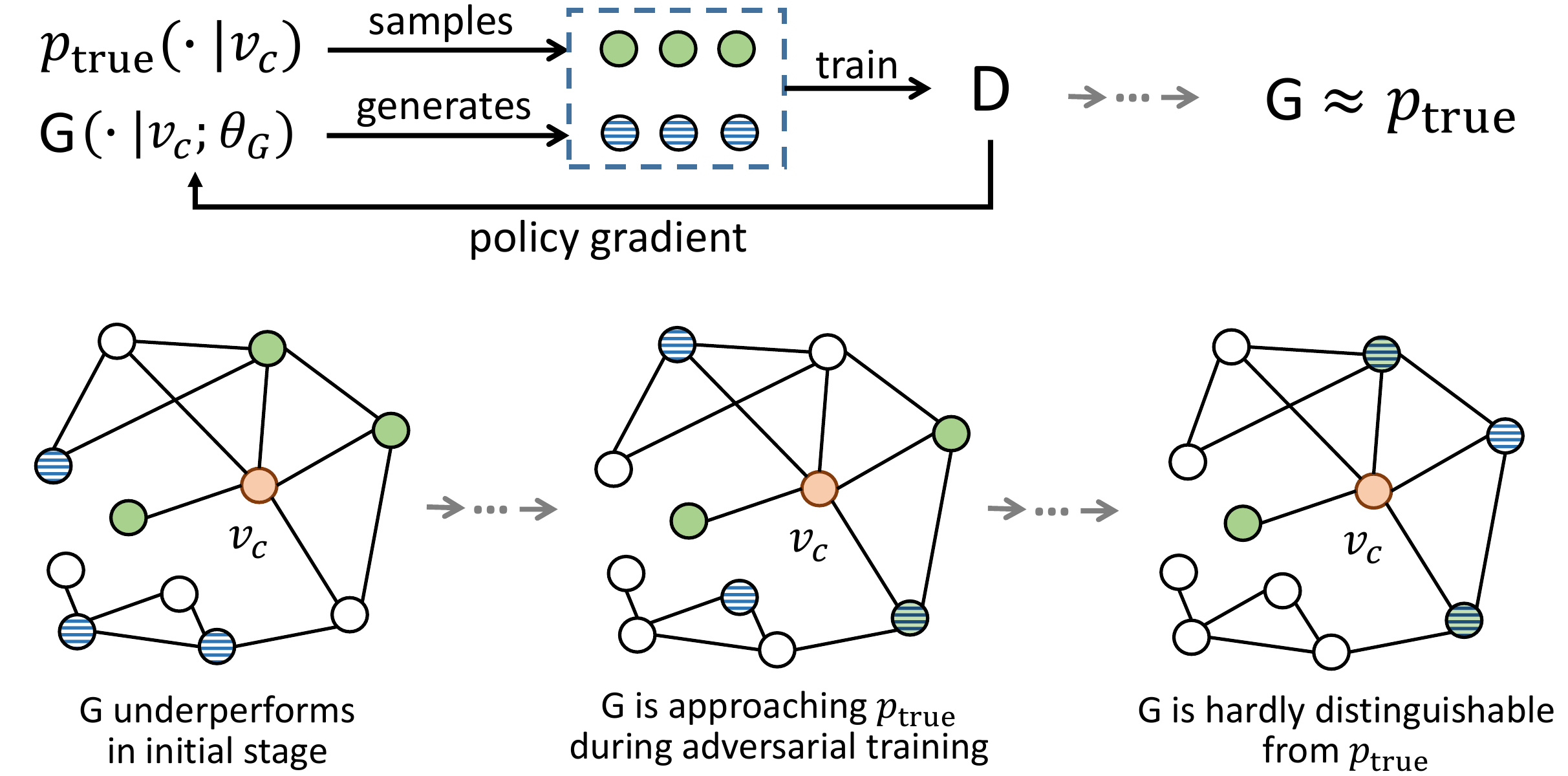}
			\caption{Illustration of GraphGAN framework.}
			\label{fig:framework}
		\end{figure}

	\subsection{Discriminator Optimization}
		Given positive samples from true connectivity distribution and negative samples from the generator, the objective for the discriminator is to maximize the log-probability of assigning the correct labels to both positive and negative samples, which could be solved by stochastic gradient ascent if $D$ is differentiable with respect to $\theta_D$.
		In GraphGAN, we define $D$ as the sigmoid function of the inner product of two input vertices:
		\begin{equation}
			\label{eq:d}
			D(v, v_c) = \sigma ({\bf d}_v^\top {\bf d}_{v_c}) = \frac{1}{1 + \exp(-{\bf d}_v^\top {\bf d}_{v_c})},
		\end{equation}
		where ${\bf d}_v, {\bf d}_{v_c} \in \mathbb R^k$ are the \textit{k}-dimensional representation vectors of vertices $v$ and $v_c$ respectively for discriminator $D$, and $\theta_D$ is the union of all ${\bf d}_v$'s.
		Any discriminative model can serve as $D$ here such as SDNE \cite{wang2016structural}, and we leave the further study of choice of discriminator for future work.
		Note that Eq. (\ref{eq:d}) simply involves $v$ and $v_c$, which indicates that given a sample pair $(v, v_c)$, we need to update only ${\bf d}_v$ and ${\bf d}_{v_c}$ by ascending the gradient with respect to them:
		\begin{equation}
			\label{eq:update_d}
			\nabla_{\theta_D} V(G, D) =
			\begin{cases}
				\nabla_{\theta_D} \log D(v, v_c), \ if \ v \sim p_{\rm{true}};\\
				\nabla_{\theta_D} \big( 1 - \log D(v, v_c) \big), \ if \ v \sim G.
			\end{cases}
		\end{equation}

	\subsection{Generator Optimization}
		In contrast to discriminator, the generator aims to minimize the log-probability that the discriminator correctly assigns negative labels to the samples generated by $G$.
		In other words, the generator shifts its approximated connectivity distribution (through its parameters $\theta_G$) to increase the scores of its generated samples, as judged by $D$.
		Because the sampling of $v$ is discrete, following \cite{schulman2015gradient,yu2017seqgan}, we propose computing the gradient of $V(G, D)$ with respect to $\theta_G$ by policy gradient:
		\begin{equation}
			\label{eq:policy_gradient}
			\begin{split}
				&\nabla_{\theta_G} V(G, D)\\
				=& \nabla_{\theta_G} \sum_{c=1}^V \mathbb E_{v \sim G(\cdot | v_c)} \big[ \log \big(1 - D(v, v_c) \big) \big]\\
				=& \sum_{c=1}^V \sum_{i=1}^N \nabla_{\theta_G} G(v_i | v_c) \log \big(1 - D(v_i, v_c) \big)\\
				=& \sum_{c=1}^V \sum_{i=1}^N G(v_i | v_c) \nabla_{\theta_G} \log G(v_i | v_c) \log \big(1 - D(v_i, v_c) \big)\\
				=& \sum_{c=1}^V \mathbb E_{v \sim G(\cdot | v_c)} \big[ \nabla_{\theta_G} \log G(v | v_c) \log \big(1 - D(v, v_c) \big) \big].
			\end{split}
		\end{equation}
		To understand the above formula, it is worth noting that gradient $\nabla_{\theta_G} V(G, D)$ is an expected summation over the gradients $\nabla_{\theta_G} \log G(v | v_c; \theta_G)$ weighted by log-probability $\log \big( 1 - D(v, v_c; \theta_D) \big)$, which, intuitively speaking, indicates that vertices with a higher probability of being negative samples will ``tug'' generator $G$ stronger away from themselves, since we apply gradient descent on $\theta_G$.
	
		We now discuss the implementation of $G$.
		A straightforward way is to define the generator as a softmax function over all other vertices \cite{wang2017irgan}, i.e.,
		\begin{equation}
			\label{eq:softmax}
			G(v | v_c) = \frac{\exp ({\bf g}_v^\top {\bf g}_{v_c})}{\sum_{v \neq v_c} \exp ({\bf g}_v^\top {\bf g}_{v_c})},
		\end{equation}
		where ${\bf g}_v, {\bf g}_{v_c} \in \mathbb R^k$ are the \textit{k}-dimensional representation vectors of vertex $v$ and $v_c$ respectively for generator $G$, and $\theta_G$ is the union of all ${\bf g}_v$'s.
		Under this setting, to update $\theta_G$ in each iteration, we calculate the approximated connectivity distribution $G(v | v_c; \theta_G)$ based on Eq. (\ref{eq:softmax}), draw a set of samples $(v, v_c)$ randomly according to $G$, and update $\theta_G$ by stochastic gradient descent.
		Softmax provides a concise and intuitive definition for the connectivity distribution in $G$, but it has two limitations in graph representation learning:
		1) The calculation of softmax in Eq. (\ref{eq:softmax}) involves all vertices in the graph, which implies that for each generated sample $v$, we need to calculate gradients $\nabla_{\theta_G} \log G(v | v_c; \theta_G)$ and update all vertices.
		This is computationally inefficient, especially for real-world large-scale graphs with millions of vertices.
		2) The graph structure encodes rich information of proximity among vertices, but softmax completely ignores the utilization of structural information from graphs as it treats vertices without any discrimination.
		Recently, hierarchical softmax \cite{morin2005hierarchical} and negative sampling \cite{mikolov2013distributed} are popular alternatives to softmax.
		Although these methods can alleviate the computation to some extent, neither of them considers structural information of a graph, thereby being unable to achieve satisfactory performance when applied to graph representation learning.

	\subsection{Graph Softmax for Generator}
		To address the aforementioned problems, in GraphGAN we propose a new alternative to softmax for the generator called \textit{graph softmax}.
		The key idea of graph softmax is to define a new method of computing connectivity distribution in generator $G(\cdot | v_c; \theta_G)$ that satisfies the following three desirable properties:
		\begin{itemize}
			\item
				\textit{Normalized}.
				The generator should produce a valid probability distribution, i.e., $\sum_{v \neq v_c} G(v | v_c; \theta_G) = 1$.
			\item
				\textit{Graph-structure-aware}.
				The generator should take advantage of the structural information of a graph to approximate the true connectivity distribution.
				Intuitively, for two vertices in a graph, their connectivity probability should decline with the increase of their shortest distance.
			\item
				\textit{Computationally efficient}.
				Distinguishable from full softmax, the computation of $G(v | v_c; \theta_G)$ should only involve a small number of vertices in the graph.
		\end{itemize}
		
		We discuss graph softmax in detail as follows.
		To calculate the connectivity distribution $G(\cdot | v_c; \theta_G)$, we first perform Breadth First Search (BFS) on the original graph $\mathcal G$ starting from vertex $v_c$, which provides us with a BFS-tree $T_c$ rooted at $v_c$.
		Given $T_c$, we denote $\mathcal N_c(v)$ as the set of neighbors of $v$ (i.e., vertices that are directly connected to $v$) in $T_c$, including its parent vertex and all child vertices if exist.
		For a given vertex $v$ and one of its neighbors $v_i \in \mathcal N_c(v)$, we define the relevance probability of $v_i$ given $v$ as
		\begin{equation}
			\label{eq:p_c}
			p_c (v_i | v) = \frac{\exp ({\bf g}_{v_i}^\top {\bf g}_v)}{\sum_{v_j \in \mathcal N_c(v)} \exp ({\bf g}_{v_j}^\top {\bf g}_v)},
		\end{equation}
		which is actually a softmax function over $\mathcal N_c(v)$.
		To calculate $G(v | v_c; \theta_G)$, note that each vertex $v$ can be reached by a unique path from the root $v_c$ in $T_c$.
		Denote the path as $P_{v_c \rightarrow v} = (v_{r_0}, v_{r_1}, ..., v_{r_m})$ where $v_{r_0} = v_c$ and $v_{r_m} = v$.
		Then the graph softmax defines $G(v | v_c; \theta_G)$ as follows:
		\begin{equation}
			\label{eq:generator}
			G(v | v_c) \triangleq \big( \prod\nolimits_{j=1}^m p_c(v_{r_j} | v_{r_{j-1}}) \big) \cdot p_c(v_{r_{m-1}} | v_{r_m}),
		\end{equation}
		where $p_c (\cdot | \cdot)$ is the relevance probability defined in Eq. (\ref{eq:p_c}).
		
		We prove that our proposed graph softmax satisfies the above three properties, i.e., graph softmax is \textit{normalized}, \textit{graph-structure-aware}, and \textit{computationally efficient}.
		
		\begin{theorem}
			$\sum_{v \neq v_c} G(v | v_c; \theta_G) = 1$ in graph softmax.
		\end{theorem}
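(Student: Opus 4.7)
The plan is to prove the identity by working bottom-up on the BFS-tree $T_c$, exploiting the fact that for every non-root vertex $u$ of $T_c$, the conditional distribution $p_c(\,\cdot\mid u)$ defined in Eq.~(\ref{eq:p_c}) is a softmax over $\mathcal N_c(u)=\{\pi(u)\}\cup\mathrm{Ch}(u)$, where $\pi(u)$ is the parent of $u$ in $T_c$ and $\mathrm{Ch}(u)$ is its set of children. This observation alone — together with the recursive structure of the path product in Eq.~(\ref{eq:generator}) — should collapse the sum $\sum_{v\neq v_c}G(v\mid v_c)$ to the telescoping value $1$.

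Concretely, for every non-root $u\in T_c$ I would introduce the quantity
\begin{equation*}
B(u)\;=\;\sum_{w\in\mathrm{desc}(u)}\Bigl(\prod_{j=1}^{\ell}p_c(u_j\mid u_{j-1})\Bigr)\,p_c(\pi(w)\mid w),
\end{equation*}
where $(u_0=u,u_1,\dots,u_{\ell}=w)$ is the unique path in $T_c$ from $u$ to a descendant $w$ of $u$ (so $w=u$ corresponds to $\ell=0$ with empty product, giving the term $p_c(\pi(u)\mid u)$). Splitting the sum according to whether $w=u$ or $w$ lies strictly below a child $u'\in\mathrm{Ch}(u)$, I obtain the recursion
\begin{equation*}
B(u)\;=\;p_c(\pi(u)\mid u)\;+\;\sum_{u'\in\mathrm{Ch}(u)}p_c(u'\mid u)\,B(u').
\end{equation*}

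Next, I would prove $B(u)=1$ by structural induction on the subtree rooted at $u$. For a leaf $u$, $\mathcal N_c(u)=\{\pi(u)\}$, so $p_c(\pi(u)\mid u)=1$ and the base case is immediate. For the inductive step, assuming $B(u')=1$ for every $u'\in\mathrm{Ch}(u)$, the recursion gives $B(u)=p_c(\pi(u)\mid u)+\sum_{u'\in\mathrm{Ch}(u)}p_c(u'\mid u)=\sum_{v_j\in\mathcal N_c(u)}p_c(v_j\mid u)=1$, since $p_c(\,\cdot\mid u)$ is normalized over $\mathcal N_c(u)$. Finally, decomposing $G(w\mid v_c)$ for any $w\neq v_c$ as $p_c(u\mid v_c)$ times the $u$-rooted subtree contribution, where $u\in\mathrm{Ch}(v_c)$ is the child of the root on the path to $w$, yields
\begin{equation*}
\sum_{v\neq v_c}G(v\mid v_c)\;=\;\sum_{u\in\mathrm{Ch}(v_c)}p_c(u\mid v_c)\,B(u)\;=\;\sum_{u\in\mathrm{Ch}(v_c)}p_c(u\mid v_c)\;=\;1,
\end{equation*}
using $\mathcal N_c(v_c)=\mathrm{Ch}(v_c)$ because the root has no parent.

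The only real subtlety — and the step most likely to require care — is the asymmetric treatment of the final factor $p_c(v_{r_{m-1}}\mid v_{r_m})$ in Eq.~(\ref{eq:generator}): it plays the role of a leaf-level ``backward'' term that must be absorbed into $B(u)$ at each vertex so that the telescoping works. Setting up the bookkeeping so that the root $v_c$ (which has no parent, hence no backward term) is handled separately from all other vertices is essentially the whole content of the argument; once $B(u)=1$ is established, normalization of the overall distribution follows immediately from the normalization of the local softmax at the root.
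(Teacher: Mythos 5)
Your proposal is correct and is essentially the paper's own argument: your quantity $B(u)$ is exactly the paper's subtree sum $\sum_{v_i\in ST_u}G(v_i\mid v_c)$ divided by the path product from $v_c$ to $u$, so your claim $B(u)=1$ is the same induction hypothesis (leaf case via $p_c(\pi(u)\mid u)=1$, inductive step via normalization of $p_c(\cdot\mid u)$ over parent plus children), and the final decomposition over the root's children is identical. No substantive difference in approach.
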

		
		\begin{proof}
			Before proving the theorem, we first give a proposition as follows.
			Denote $ST_v$ as the sub-tree rooted at $v$ in $T_c$ ($v \neq v_c$).
			Then we have
			\begin{equation}
				\label{eq:proposition}
				\sum\nolimits_{v_i \in ST_v} G(v_i | v_c) = \prod\nolimits_{j=1}^m p_c(v_{r_j} | v_{r_{j-1}}),
			\end{equation}
			where $(v_{r_0}, v_{r_1}, ..., v_{r_m})$ is on the path $P_{v_c \rightarrow v}$, $v_{r_0} = v_c$ and $v_{r_m} = v$.			
			The proposition can be proved by bottom-up induction on the BFS-tree $T_c$:
			\begin{itemize}
				\item
					For each leaf vertex $v$, we have $\sum_{v_i \in ST_v} G(v_i | v_c; \theta_G) = G(v | v_c; \theta_G) = \big( \prod_{j=1}^m p_c(v_{r_j} | v_{r_{j-1}}) \big) \cdot p_c(v_{r_{m-1}} | v_{r_m}) = \prod_{j=1}^m p_c(v_{r_j} | v_{r_{j-1}})$.
					The last step is due to the fact that leaf vertex $v$ has only one neighbor (parent vertex $v_{r_{m-1}}$), therefore, $p_c(v_{r_{m-1}} | v_{r_m}) = p_c(v_{r_{m-1}} | v) = 1$.
				\item
					For each non-leaf vertex $v$, we denote $\mathcal C_c(v)$ as the set of children vertices of $v$ in $T_c$.
					By induction hypothesis, each children vertex $v_k \in \mathcal C_c(v)$ satisfies the proposition in Eq. (\ref{eq:proposition}).
					Thus we have
					\begin{equation}
						\nonumber
						\begin{split}
							&\sum\nolimits_{v_i \in ST_v} G(v_i | v_c)\\
							=& G(v | v_c) + \sum\nolimits_{v_k \in \mathcal C_c(v)} \sum\nolimits_{v_i \in ST_{v_k}} G(v_i | v_c)\\
							=& \big( \prod\nolimits_{j=1}^m p_c(v_{r_j} | v_{r_{j-1}}) \big) p_c(v_{r_{m-1}} | v_{r_m})\\
							&+ \sum\nolimits_{v_k \in \mathcal C_c(v)} \Big( \big( \prod\nolimits_{j=1}^m p_c(v_{r_j} | v_{r_{j-1}}) \big) p_c(v_k | v_{r_m}) \Big)\\
							=& \big( \prod_{j=1}^m p_c(v_{r_j} | v_{r_{j-1}}) \big) \big( p_c(v_{r_{m-1}} | v) + \sum_{v_k \in \mathcal C_c(v)} p_c(v_k | v) \big)\\
							=& \prod\nolimits_{j=1}^m p_c(v_{r_j} | v_{r_{j-1}}).
						\end{split}
					\end{equation}
			\end{itemize}
			So far we have proven Eq. (\ref{eq:proposition}).
			Applying Eq. (\ref{eq:proposition}) to all children vertices of $v_c$, we have
			$\sum_{v \neq v_c} G(v | v_c; \theta_G) = \sum_{v_k \in \mathcal C_c(v_c)} \sum_{v \in ST_{v_k}} G(v | v_c; \theta_G) = \sum_{v_k \in \mathcal C_c(v_c)} p_c(v_k | v_c)\\= 1$.
		\end{proof}
		
		\begin{theorem}
		\label{thm:2}
			In graph softmax, $G(v | v_c; \theta_G)$ decreases exponentially with the increase of the shortest distance between $v$ and $v_c$ in original graph $\mathcal G$.
		\end{theorem}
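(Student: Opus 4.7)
The plan is to exploit two structural features of the graph softmax formula: first, that the BFS tree rooted at $v_c$ faithfully records shortest-path distances in $\mathcal G$, so the index $m$ in $P_{v_c\to v}=(v_{r_0},\dots,v_{r_m})$ is exactly $d_\mathcal G(v_c,v)$; and second, that $G(v\mid v_c;\theta_G)$ is a product of $m+1$ relevance probabilities, each bounded by $1$. If each factor can be uniformly bounded by some $\alpha<1$, the product decays like $\alpha^m$, which is precisely exponential decay in $d_\mathcal G(v_c,v)$.

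Concretely, I would proceed in three steps. First, invoke the standard BFS property that the unique tree path from $v_c$ to any $v$ in $T_c$ has length equal to the shortest-path distance from $v_c$ to $v$ in $\mathcal G$; this identifies $m$ with $d_\mathcal G(v_c,v)$. Second, rewrite the defining identity
\begin{equation*}
G(v\mid v_c)=\Bigl(\prod_{j=1}^{m} p_c(v_{r_j}\mid v_{r_{j-1}})\Bigr)\cdot p_c(v_{r_{m-1}}\mid v_{r_m}),
\end{equation*}
and observe that each of the first $m$ factors is a softmax probability over $\mathcal N_c(v_{r_{j-1}})$; since any internal vertex $v_{r_{j-1}}$ on the path has at least the parent (when $j\ge 2$) or another child besides $v_{r_j}$ in its tree neighborhood, $|\mathcal N_c(v_{r_{j-1}})|\ge 2$, so the corresponding softmax probability is strictly less than $1$. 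Third, combine these bounds. Under a mild uniform bound on the inner products $\mathbf g_u^\top \mathbf g_w$ (e.g., assuming embeddings lie in a bounded region, which is standard in practice) there exists a constant $\alpha\in(0,1)$, independent of $v$ and $v_c$, such that every relevance probability along the path satisfies $p_c(v_{r_j}\mid v_{r_{j-1}})\le \alpha$. Multiplying yields
\begin{equation*}
G(v\mid v_c;\theta_G)\ \le\ \alpha^{\,m}\ =\ \alpha^{\,d_\mathcal G(v_c,v)},
\end{equation*}
which is the desired exponential decay.

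The main obstacle is justifying a uniform upper bound $\alpha<1$ on the individual softmax factors. Formally, a factor $p_c(v_{r_j}\mid v_{r_{j-1}})$ can be made arbitrarily close to $1$ if the learned inner products are allowed to be unbounded and highly skewed toward the path edge; so the clean statement of exponential decay needs either a norm constraint on $\theta_G$ or an averaged/generic interpretation. I expect the paper's argument to either make this boundedness assumption implicitly (the usual convention for embedding models) or to treat "exponential decrease" structurally, reading off the $m$-fold product of probabilities directly from the definition and citing the lower-bounded neighborhood sizes $|\mathcal N_c(\cdot)|\ge 2$ along the path. Either way, the core of the proof is steps one and two above; step three is where the quantitative constant enters.
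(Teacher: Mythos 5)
Your proposal matches the paper's argument: the paper's proof consists exactly of your steps one and two --- identifying $m$ with the shortest distance via the BFS property and reading off that $G(v \mid v_c; \theta_G)$ is a product of $m+1$ relevance probabilities --- and then simply asserts the exponential decay, which is precisely the ``structural'' reading you anticipated. Your step three (the uniform bound $\alpha<1$ obtained from bounded embeddings) goes beyond the paper, which never addresses the possibility that individual factors approach $1$; you have correctly identified the one real gap in the published proof.
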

		
		\begin{proof}
			According to the definition of graph softmax, $G(v | v_c; \theta_G)$ is the product of $m+1$ terms of relevance probability, where $m$ is the length of path $P_{v_c \rightarrow v}$.
			Note that $m$ is also the shortest distance between $v_c$ and $v$ in graph $\mathcal G$, since BFS-tree $T_c$ preserves the shortest distances between $v_c$ and all other vertices in the original graph.
			Therefore, we conclude that $G(v | v_c; \theta_G)$ is exponentially proportional to the inverse of the shortest distance between $v$ and $v_c$ in $\mathcal G$.
		\end{proof}
		Following Theorem \ref{thm:2}, we further justify that graph softmax characterizes the real pattern of connectivity distribution precisely by conducting an empirical study in the experiment part.
		
		\begin{theorem}
			In graph softmax, calculation of $G(v | v_c; \theta_G)$ depends on $O(d \log V)$ vertices, where $d$ is average degree of vertices and $V$ is the number of vertices in graph $\mathcal G$.
		\end{theorem}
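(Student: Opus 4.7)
The plan is to read off the cost directly from the definition of graph softmax in Eq.~(\ref{eq:generator}) and then bound the length of the path $P_{v_c \rightarrow v}$ by arguing about the depth of the BFS-tree $T_c$.

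First, I would expand what ``depends on'' means: computing $G(v | v_c;\theta_G)$ requires evaluating the $m+1$ relevance probabilities $p_c(v_{r_j} | v_{r_{j-1}})$ and $p_c(v_{r_{m-1}} | v_{r_m})$ along the path. By Eq.~(\ref{eq:p_c}), each factor $p_c(\cdot | v_{r_{j-1}})$ is a softmax over $\mathcal N_c(v_{r_{j-1}})$, the neighbors of $v_{r_{j-1}}$ in $T_c$. Since $\mathcal N_c(v_{r_{j-1}}) \subseteq \mathcal N(v_{r_{j-1}})$ (the BFS-tree only keeps edges of the original graph), its size is bounded by the degree of $v_{r_{j-1}}$ in $\mathcal G$. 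Summing across the $m+1$ vertices on the path and using that the average degree is $d$, the total number of vertices that enter the computation is $O(m \cdot d)$.

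Second, I would bound the path length $m$ by $O(\log V)$. The key point is that $m$ equals the depth of $v$ in the BFS-tree $T_c$ (equivalently, the shortest-path distance from $v_c$ to $v$ in $\mathcal G$, as already noted in the proof of Theorem~\ref{thm:2}). In a tree whose vertices have average branching factor $d$, the number of vertices at depth at most $h$ grows like $\Theta(d^h)$; for $T_c$ to contain all $V$ vertices of $\mathcal G$, the depth must therefore satisfy $d^h = \Omega(V)$, i.e., $h = O(\log_d V) = O(\log V)$. Plugging $m = O(\log V)$ into the $O(m \cdot d)$ bound from the previous step yields the claimed $O(d \log V)$ complexity.

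The main obstacle is the second step: the $O(\log V)$ bound on the BFS-tree depth is not a worst-case statement (a path graph has depth $V-1$), but rather relies on an average or typical branching assumption, consistent with the average-degree formulation in the theorem. I would therefore state this bound as characterizing the expected depth of a BFS-tree on a graph of average degree $d$, as is standard for small-world-like networks, and note that the first step of the argument is entirely unconditional.
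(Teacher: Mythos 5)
Your proposal is correct and follows essentially the same route as the paper: decompose the cost of $G(v|v_c;\theta_G)$ into the vertices on the path $P_{v_c\rightarrow v}$ and their neighbors in $T_c$, bound the path length by the BFS-tree depth $O(\log V)$, and multiply by the average degree $d$. Your explicit branching-factor justification of the depth bound, and your acknowledgment that it is an average-case rather than worst-case statement (a path graph has depth $V-1$), make the argument more honest than the paper's bare assertion that ``in general'' the depth is $\log V$, but the underlying argument is the same.
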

		
		\begin{proof}
			According to Eq. (\ref{eq:p_c}) and Eq. (\ref{eq:generator}), the calculation of $G(v | v_c; \theta_G)$ involves two types of vertices: vertices on the path $P_{v_c \rightarrow v}$ and vertices directly connected to the path (i.e., vertices whose distance from the path is 1).
			In general, the maximal length of the path is $\log V$, which is the depth of the BFS-tree, and each vertex in the path is connected to $d$ vertices on average.
			Therefore, the total number of involved vertices in $G(v | v_c; \theta_G)$ is $O(d \log V)$.
		\end{proof}
		
		\begin{figure*}
			\centering
			\includegraphics[width=1.0\textwidth]{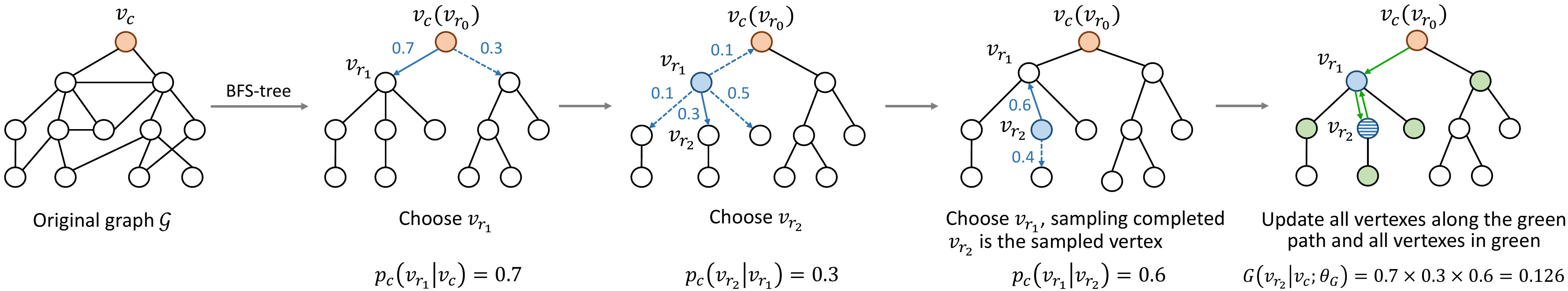}
			\caption{Online generating strategy for generator $G$. The blue digits are the relevance probability $p_c(v_i | v)$, and the blue solid arrow indicates the direction that $G$ chooses to move in. Upon completion of sampling, the vertex with blue stripes is the sampled one, and all colored vertices in the rightmost tree require updating accordingly.}
			\label{fig:graph_softmax}
		\end{figure*}
		
		Next, we discuss the generating (or sampling) strategy for generator $G$.
		A feasible way of generating vertices is to calculate $G(v | v_c; \theta_G)$ for all vertices $v \neq v_c$, and perform random sampling proportionally to their approximated connectivity probabilities.
		Here we propose an online generating method, which is more computationally efficient and consistent with the definition of graph softmax.
		To generate a vertex, we perform a random walk starting at the root $v_c$ in $T_c$ with respect to the transition probability defined in Eq. (\ref{eq:p_c}).
		During the process of random walk, if the currently visited vertex is $v$ and generator $G$ decides to visit $v$'s parent (i.e., turning around on the path) for the first time, then $v$ is chosen as the generated vertex.
		
		The online generating strategy for the generator is formally described in Algorithm \ref{alg:generating_strategy}.
		We denote the currently visited vertex as $v_{cur}$ and the previously visited one as $v_{pre}$.
		Note that Algorithm \ref{alg:generating_strategy} terminates in $O(\log V)$ steps, since the random-walk path will turn around at the latest when reaching leaf vertices.
		Similar to the computation of graph softmax, the complexity of the above online generating method is $O(d \log V)$, which is significantly lower than the offline method with a complexity of $O(V \cdot d \log V)$.
		Figure \ref{fig:graph_softmax} gives an illustrative example of the generating strategy as well as the computation process of graph softmax.
		In each step of a random walk, a blue vertex $v_{i}$ is chosen out of all neighbors of $v_{cur}$ by random selection proportionally to the relevance probability $p_c(v_i | v_{cur})$ defined in Eq. (\ref{eq:p_c}).
		Once $v_i$ equals $v_{pre}$, i.e., the random walk revisits $v_{cur}$'s parent $v_{pre}$, $v_{cur}$ would be sampled out (indicated as the vertex with blue stripes in the figure), and all vertices along the path $P_{v_c \rightarrow v_{cur}}$ as well as the vertices that are directly connected to this path need to be updated according to Eq. (\ref{eq:policy_gradient}), (\ref{eq:p_c}) and (\ref{eq:generator}). 

		\setlength{\textfloatsep}{10pt}
		\begin{algorithm}[t]
			\small
			\caption{Online generating strategy for the generator}
			\label{alg:generating_strategy}
			\begin{algorithmic}[1]
				\REQUIRE{BFS-tree $T_c$, representation vectors $\{{\bf g}_i\}_{i \in \mathcal V}$}
				\ENSURE{generated sample $v_{gen}$}
				\STATE $v_{pre} \leftarrow v_c$, $v_{cur} \leftarrow v_c$;
				\WHILE{\TRUE}
					\STATE Randomly select $v_i$ proportionally to $p_c(v_i | v_{cur})$ in Eq. (\ref{eq:p_c});
					\IF{$v_i = v_{pre}$}
						\STATE $v_{gen} \leftarrow v_{cur}$;
						\RETURN $v_{gen}$
					\ELSE
						\STATE	$v_{pre} \leftarrow v_{cur}$, $v_{cur} \leftarrow v_i$;
					\ENDIF
				\ENDWHILE
			\end{algorithmic}
		\end{algorithm}
		
		\begin{algorithm}[t]
			\small
			\caption{GraphGAN framework}
			\label{alg:graph_gan}
			\begin{algorithmic}[1]
				\REQUIRE{dimension of embedding $k$, size of generating samples $s$, size of discriminating samples $t$}
				\ENSURE{generator $G(v | v_c; \theta_G)$, discriminator $D(v, v_c; \theta_D)$}
				\STATE Initialize and pre-train $G(v | v_c; \theta_G)$ and $D(v, v_c; \theta_D)$;
				\STATE Construct BFS-tree $T_c$ for all $v_c \in \mathcal V$;
				\WHILE{GraphGAN not converge}
					\FOR{G-steps}
						\STATE $G(v | v_c; \theta_G)$ generates $s$ vertices for each vertex $v_c$ according to Algorithm \ref{alg:generating_strategy};
						\STATE Update $\theta_G$ according to Eq. (\ref{eq:policy_gradient}), (\ref{eq:p_c}) and (\ref{eq:generator});
					\ENDFOR
					\FOR{D-steps}
						\STATE Sample $t$ positive vertices from ground truth and $t$ negative vertices from $G(v | v_c; \theta_G)$ for each vertex $v_c$;
						\STATE Update $\theta_D$ according to Eq. (\ref{eq:d}) and (\ref{eq:update_d});
					\ENDFOR
				\ENDWHILE
				\RETURN $G(v | v_c; \theta_G)$ and $D(v, v_c; \theta_D)$
			\end{algorithmic}
		\end{algorithm}

		Finally, the overall logic of GraphGAN is summarized in Algorithm \ref{alg:graph_gan}.
		We provide time complexity analysis of GraphGAN as follows.
		The complexity of BFS-tree construction for all vertices in line 2 is $O \big( V(V+E) \big) = O(dV^2)$ since the time complexity of BFS is $O(V+E)$ \cite{cormen2009introduction}.
		In each iteration, the complexity of both line 5 and line 6 is $O(sV \cdot d\log V \cdot k)$, and the complexity of line 9 and line 10 is $O(tV \cdot d\log V \cdot k)$ and $O(tV \cdot k)$, respectively.
		In general, if we treat $k$, $s$, $t$, and $d$ as constants, the complexity of each iteration in GraphGAN is $O(V\log V)$.

\section{Experiments}
	In this section, we evaluate the performance of GraphGAN\footnote{https://github.com/hwwang55/GraphGAN} on a series of real-world datasets.
	Specifically, we choose three application scenarios for experiments, i.e., link prediction, node classification, and recommendation.

	\subsection{Experiments Setup}
		We utilize the following five datasets in our experiments:
		\begin{itemize}
		 	\setlength\itemsep{0.0em}
			\item
				arXiv-AstroPh\footnote{https://snap.stanford.edu/data/ca-AstroPh.html} is from the e-print arXiv and covers scientific collaborations between authors with papers submitted to the Astro Physics category.
				The vertices represent authors and the edge indicates co-author relationship.
				This graph has 18,772 vertices and 198,110 edges.
			\item
				arXiv-GrQc\footnote{https://snap.stanford.edu/data/ca-GrQc.html} is also from arXiv and covers scientific collaborations between authors with papers submitted to the General Relativity and Quantum Cosmology categories.
				This graph has 5,242 vertices and 14,496 edges.
			\item
				BlogCatalog\footnote{http://socialcomputing.asu.edu/datasets/BlogCatalog} is a network of social relationships of the bloggers listed on the BlogCatalog website.
				The labels of vertices represent blogger interests inferred through the metadata provided by the bloggers.
				This graph has 10,312 vertices, 333,982 edges, and 39 different labels.
			\item
				Wikipedia\footnote{http://www.mattmahoney.net/dc/textdata} is a co-occurrence network of words appearing in the first $10^9$ bytes of the Eglish Wikipedia dump.
				The labels represent the inferred Part-of-Speech (POS) tags of words.
				This graph has 4,777 vertices, 184,812 edges, and 40 different labels.
			\item
				MovieLens-1M\footnote{https://grouplens.org/datasets/movielens/1m/} is a bipartite graph consisting of approximately 1 million ratings (edges) with 6,040 users and 3,706 movies in MovieLens website.
		\end{itemize}
		
		We compare our proposed GraphGAN with the following four baselines for graph representation learning:
		\begin{itemize}
			\setlength\itemsep{0.0em}
			\item
				DeepWalk \cite{perozzi2014deepwalk} adopts random walk and Skip-Gram to learn vertex embeddings.
			\item
				LINE \cite{tang2015line} preserves the first-order and second-order proximity among vertices in the graph.
			\item
				Node2vec \cite{grover2016node2vec} is a variant of DeepWalk and designs a biased random walk to learn vertex embeddings.
			\item
				Struc2vec \cite{ribeiro2017struc2vec} captures the structural identity of vertices in a graph.
		\end{itemize}
		
		For all three experiment scenarios, we perform stochastic gradient descent to update parameters in GraphGAN with learning rate $0.001$.
		In each iteration, we set $s$ as 20 and $t$ as the number of positive samples in the test set for each vertex, then run G-steps and D-steps for 30 times, respectively.
		The dimension of representation vectors $k$ for all methods is set as $20$.
		The above hyper-parameters are chosen by cross validation.
		The final learned vertex representations are ${\bf g}_i$'s.
		Parameter settings for all baselines are as default.

	\subsection{Empirical Study}
		\begin{figure}
			\centering
			\begin{subfigure}[b]{0.23\textwidth}
   				\includegraphics[width=\textwidth]{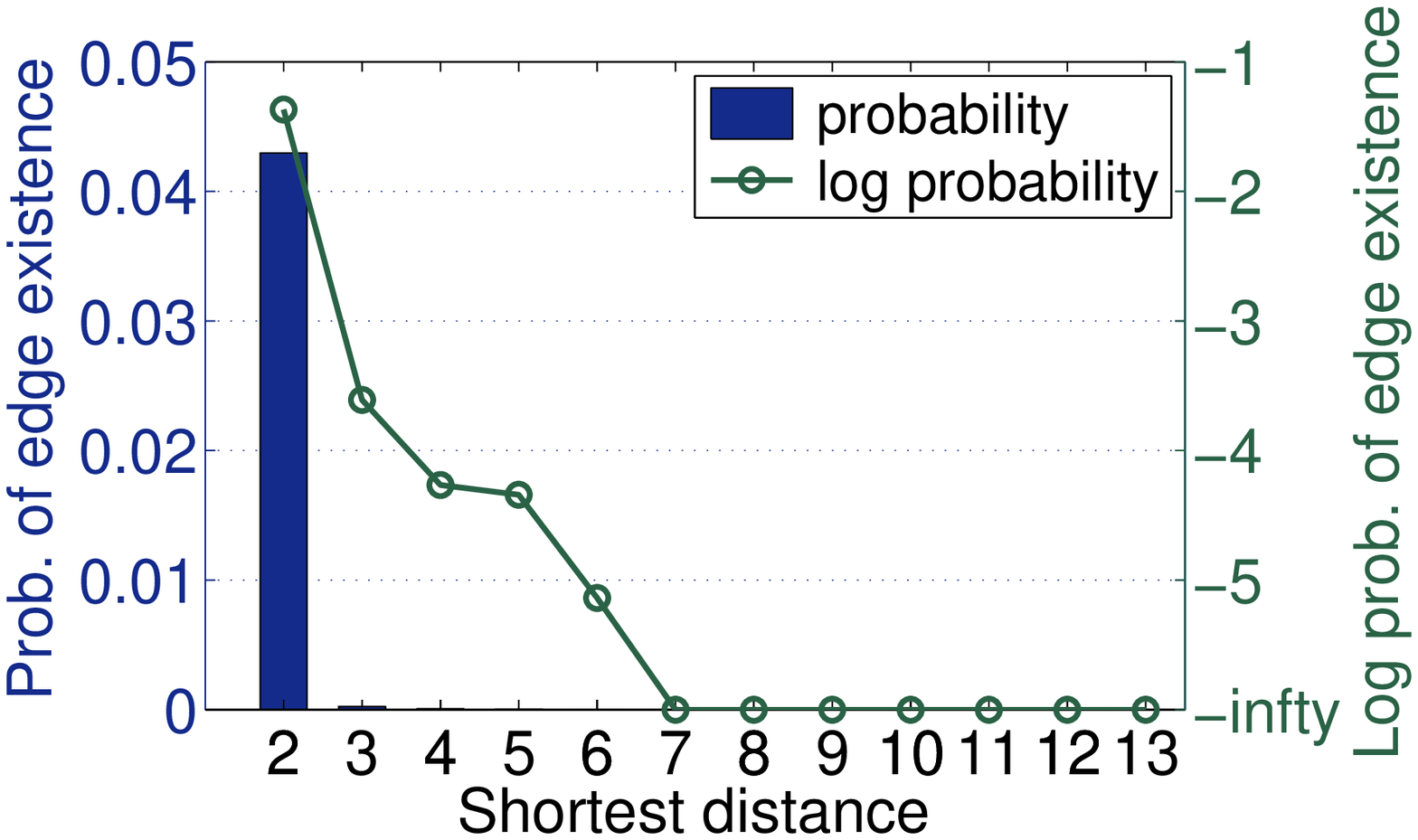}
   				\caption{arXiv-AstroPh}
   				\label{fig:empirical_study_1}
			\end{subfigure}
			\hfill
			\begin{subfigure}[b]{0.23\textwidth}
				\includegraphics[width=\textwidth]{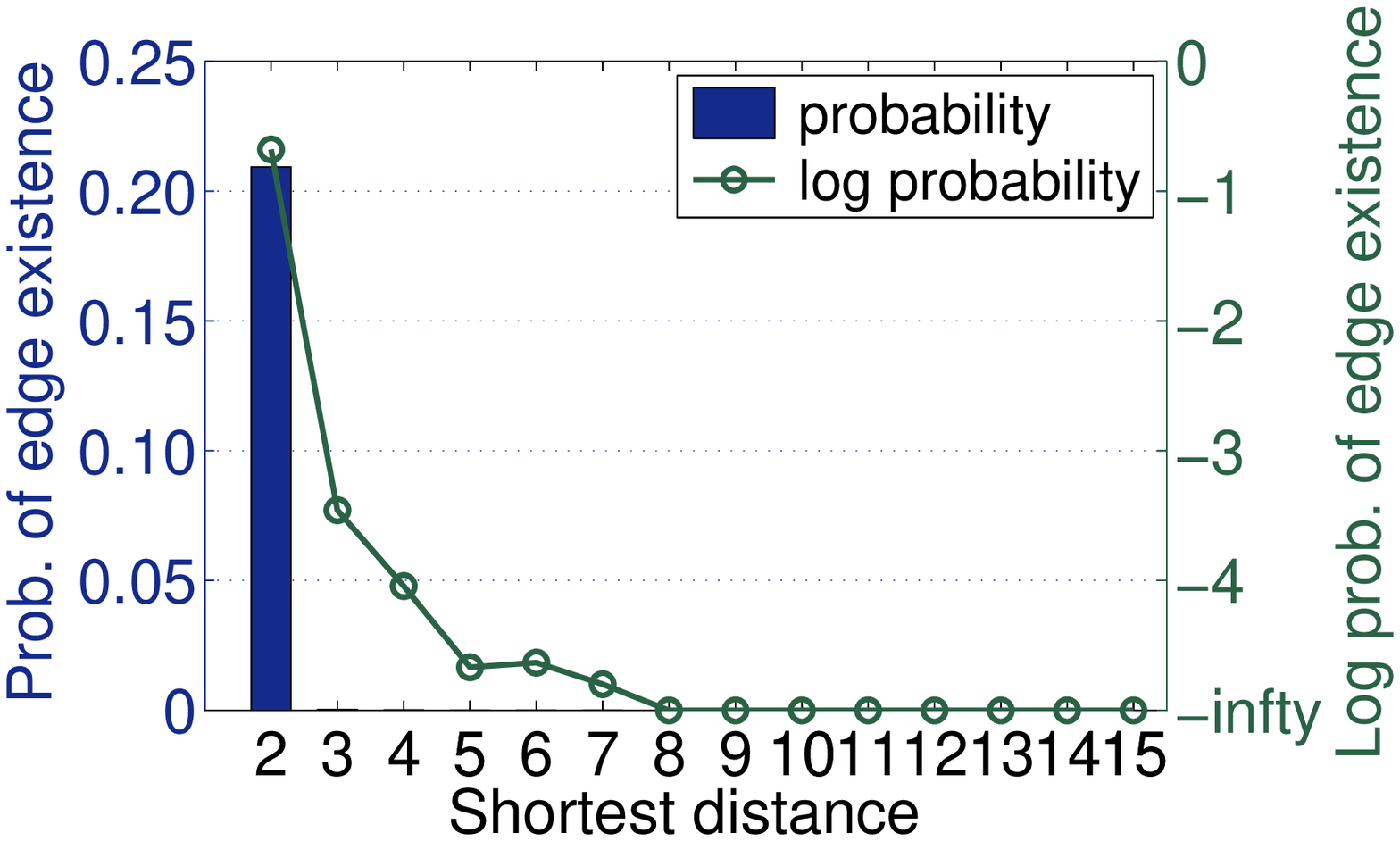}
				\caption{arXiv-GrQc}
				\label{fig:empirical_study_2}
			\end{subfigure}
			\caption{The correlation between the probability of edge existence and the shortest distance for a given vertex pair.}			
			\label{fig:empirical_study}
		\end{figure}
		
		We conduct an empirical study to investigate the real pattern of connectivity distribution in graphs.
		Specifically, for a given vertex pair, we aim to reveal how the probability of edge existence changes with their shortest distance in the graph.
		To achieve this, we first randomly sample 1 million vertex pairs from arXiv-AstroPh and arXiv-GrQc datasets, respectively.
		For each selected vertex pair, we remove the edge between them if it exists (because it is treated as hidden ground truth), and calculate their shortest distance.
		We count the probability of edge existence for all possible shortest distances, and plot the results in Figure \ref{fig:empirical_study} (the disconnected case is omitted).
		It is evident that the probability of edge existence between vertex pair drops dramatically with the increase of their shortest distance.
		We also plot the log probability curves in Figure \ref{fig:empirical_study}, which generally trends towards linear decline with $R^2=0.831$ and $0.710$.
		The above finding empirically demonstrates that the probability of edge existence between a pair of vertices is approximately exponentially proportional to the inverse of their shortest distance, which strongly proves that graph softmax captures the essence of real-world graphs according to Theorem \ref{thm:2}.

	\subsection{Link Prediction}
		\begin{table}[t]
			\setlength{\abovecaptionskip}{3pt}
			\small
                	\centering
                	\caption{Accuracy and Macro-F1 on arXiv-AstroPh and arXiv-GrQc in link prediction.}
                	\begin{tabular}{|c|c|c|c|c|}
                    	\hline
                    	\multirow{2}{*}{Model} & \multicolumn{2}{c|}{arXiv-AstroPh} & \multicolumn{2}{c|}{arXiv-GrQc} \\
                    	\cline{2-5}
                    	& Acc & Macro-F1 & Acc & Macro-F1 \\
                    	\hline
                    	DeepWalk & 0.841 & 0.839 & 0.803 & 0.812 \\
                    	\hline
                    	LINE & 0.820 & 0.814 & 0.764 & 0.761 \\
                    	\hline
                    	Node2vec & 0.845 & 0.854 & 0.844 & 0.842  \\
                    	\hline
                    	Struc2vec & 0.821 & 0.810 & 0.780 & 0.776 \\
                    	\hline
                    	GraphGAN & \textbf{0.855} & \textbf{0.859} & \textbf{0.849} & \textbf{0.853} \\
                    	\hline
			\end{tabular}
			\label{table:link_prediction}
		\end{table}
		
		In link prediction, our goal is to predict whether there exists an edge between two given vertices.
		Therefore, this task shows the performance of edge predictability of different graph representation learning methods.
		We randomly hide $10\%$ of edges in the original graph as ground truth, and use the left graph to train all graph representation learning models.
		After training, we obtain the representation vectors for all vertices and use logistic regression method to predict the probability of edge existence for a given vertex pair.
		Our test set consists of the hidden $10\%$ vertex pairs (edges) in the original graph as the positive samples and randomly selected disconnected vertex pairs as negative samples with equal number.
		We use arXiv-AstroPh and arXiv-GrQc as datasets, and report the results of \textit{Accuracy} and \textit{Macro-F1} in Table \ref{table:link_prediction}.
		We have the following observations:
		1) Performance of LINE and struc2vec is relatively poor in link prediction, as they cannot quite capture the pattern of edge existence in graphs.
		2)  DeepWalk and node2vec perform better than LINE and struc2vec.
		This is probably because DeepWalk and node2vec both utilize the random-walk-based Skip-Gram model, which is better at extracting proximity information among vertices.
		3) GraphGAN outperforms all the baselines in link prediction.
		Specifically, GraphGAN improves Accuracy on arXiv-AstroPh and arXiv-GrQc by $1.18\%$ to $4.27\%$ and $0.59\%$ to $11.13\%$, respectively.
		Our explanation is that adversarial training provides GraphGAN a higher learning flexibility than the single-model training for baselines.
		
		\begin{figure}
			\centering
			\begin{subfigure}[b]{0.23\textwidth}
   				\includegraphics[width=\textwidth]{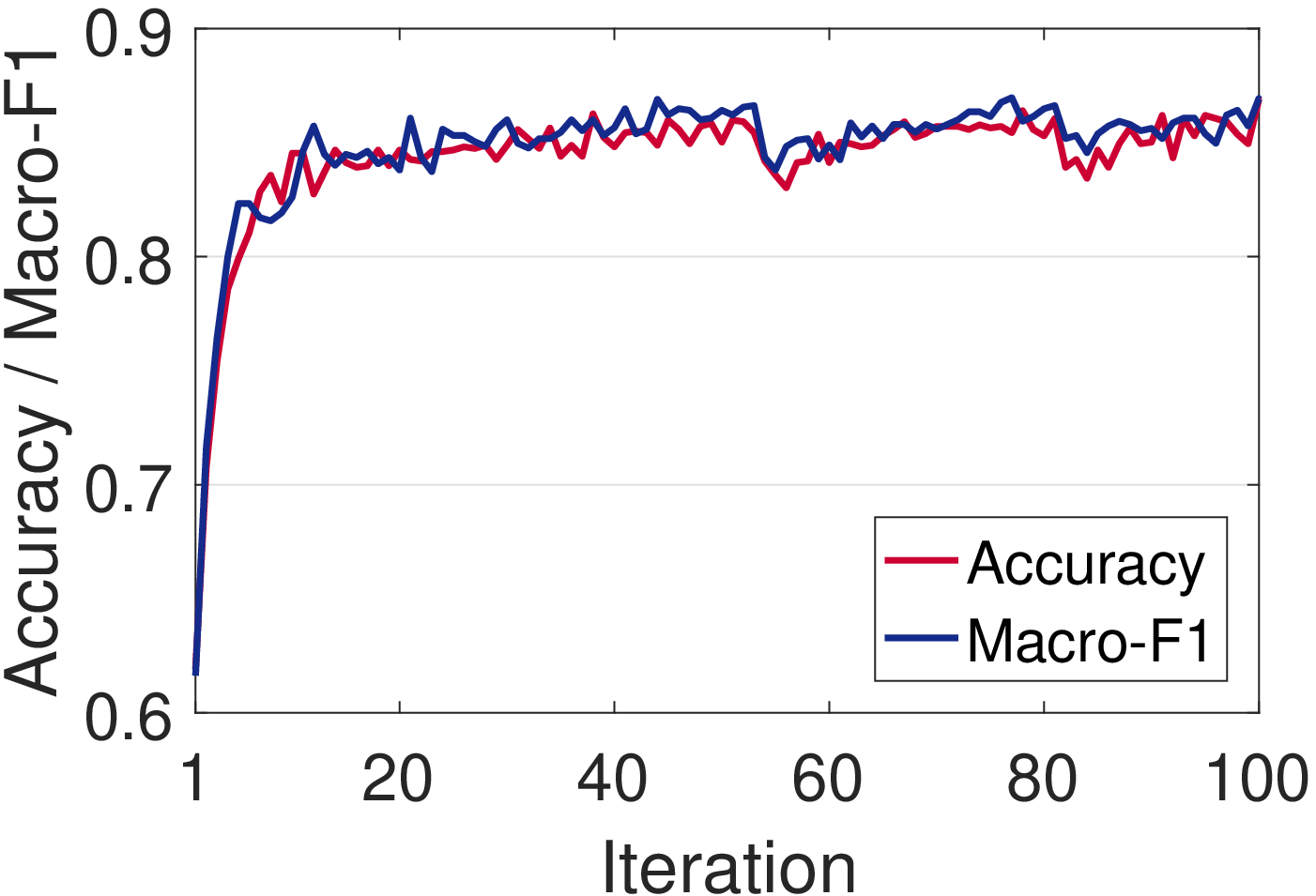}
   				\caption{Generator}
   				\label{fig:generator}
			\end{subfigure}
			\hfill
			\begin{subfigure}[b]{0.23\textwidth}
				\includegraphics[width=\textwidth]{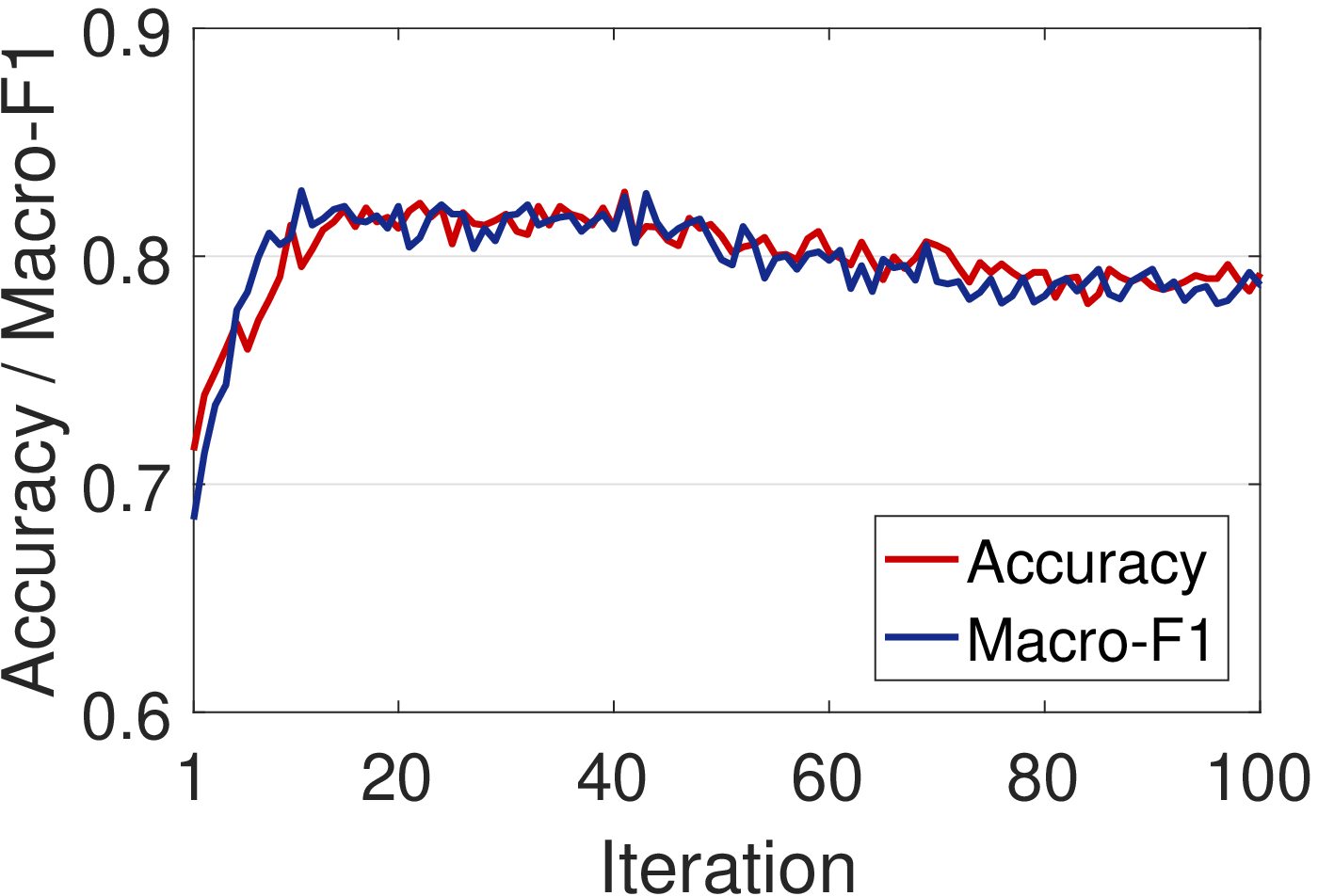}
				\caption{Discriminator}
				\label{fig:discriminator}
			\end{subfigure}
			\caption{Learning curves of the generator and the discriminator of GraphGAN on arXiv-GrQc in link prediction.}			
			\label{fig:learning_curves}
		\end{figure}
		
		To intuitively understand the learning stability of GraphGAN, we further illustrate the learning curves of the generator and the discriminator on arXiv-GrQc in Figure \ref{fig:learning_curves}.
		From Figure \ref{fig:learning_curves} we observe that the minimax game in GraphGAN arrives at an equilibrium where the generator performs outstandingly well after convergence, while performance of the discriminator boosts at first but gradually falls below 0.8.
		Note that the discriminator does not degrade to a random-guess level, because the generator still provides lots of true negative samples in practice.
		The result suggests that, different with IRGAN \cite{wang2017irgan}, the design of graph softmax enables the generator in GraphGAN to draw samples and learn vertex embeddings more efficiently.

	\subsection{Node Classification}
		\begin{table}[t]
			\setlength{\abovecaptionskip}{3pt}
			\small
                	\centering
                	\caption{Accuracy and Macro-F1 on BlogCatalog and Wikipedia in node classification.}
                	\begin{tabular}{|c|c|c|c|c|}
                    	\hline
                    	\multirow{2}{*}{Model} & \multicolumn{2}{c|}{BlogCatalog} & \multicolumn{2}{c|}{Wikipedia} \\
                    	\cline{2-5}
                    	& Acc & Macro-F1 & Acc & Macro-F1 \\
                    	\hline
                    	DeepWalk & 0.225 & 0.214 & 0.194 & 0.183 \\
                    	\hline
                    	LINE & 0.205 & 0.192 & 0.175 & 0.164 \\
                    	\hline
                    	Node2vec & 0.215 & 0.206 & 0.191 & 0.179  \\
                    	\hline
                    	Struc2vec & 0.228 & 0.216 & 0.211 & 0.190 \\
                    	\hline
                    	GraphGAN & \textbf{0.232} & \textbf{0.221} & \textbf{0.213} & \textbf{0.194} \\
                    	\hline
			\end{tabular}
			\label{table:node_classification}
		\end{table}
		
		In node classification, each vertex is assigned one or multiple labels.
		After we observe a fraction of vertices and their labels, we aim to predict labels for the remaining vertices.
		Therefore, the performance of node classification can reveal the distinguishability of vertices under different graph representation learning methods.
		To conduct the experiment, we train GraphGAN and baselines on the whole graph to obtain vertex representations, and use logistic regression as classifier to perform node classification with 9:1 train-test ratio.
		We use BlogCatalog and Wikipedia as datasets.
		The results of \textit{Accuracy} and \textit{Macro-F1} are presented in Table \ref{table:node_classification}.
		As we can see, GraphGAN outperforms all baselines on both datasets.
		For example, GraphGAN achieves gains of $1.75\%$ to $13.17\%$ and $0.95\%$ to $21.71\%$ on Accuracy on two datasets, respectively.
		This indicates that though GraphGAN is directly designed to optimize the approximated connectivity distribution on edges, it can still effectively encode the information of vertices into the learned representations.

	\subsection{Recommendation}
		\begin{figure}
			\centering
			\begin{subfigure}[b]{0.23\textwidth}
   				\includegraphics[width=\textwidth]{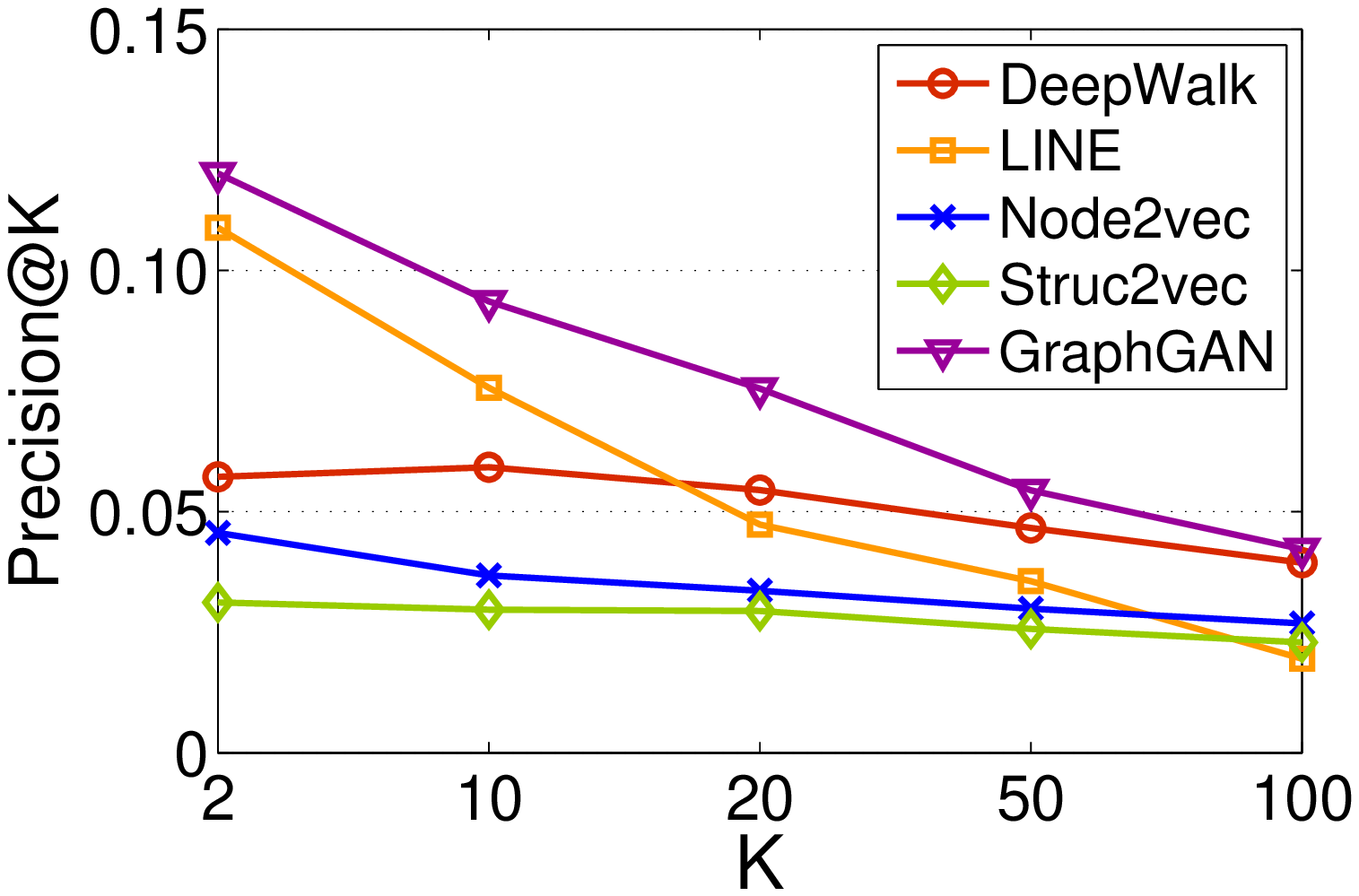}
   				\caption{Precision@K}
   				\label{fig:precision}
			\end{subfigure}
			\hfill
			\begin{subfigure}[b]{0.23\textwidth}
				\includegraphics[width=\textwidth]{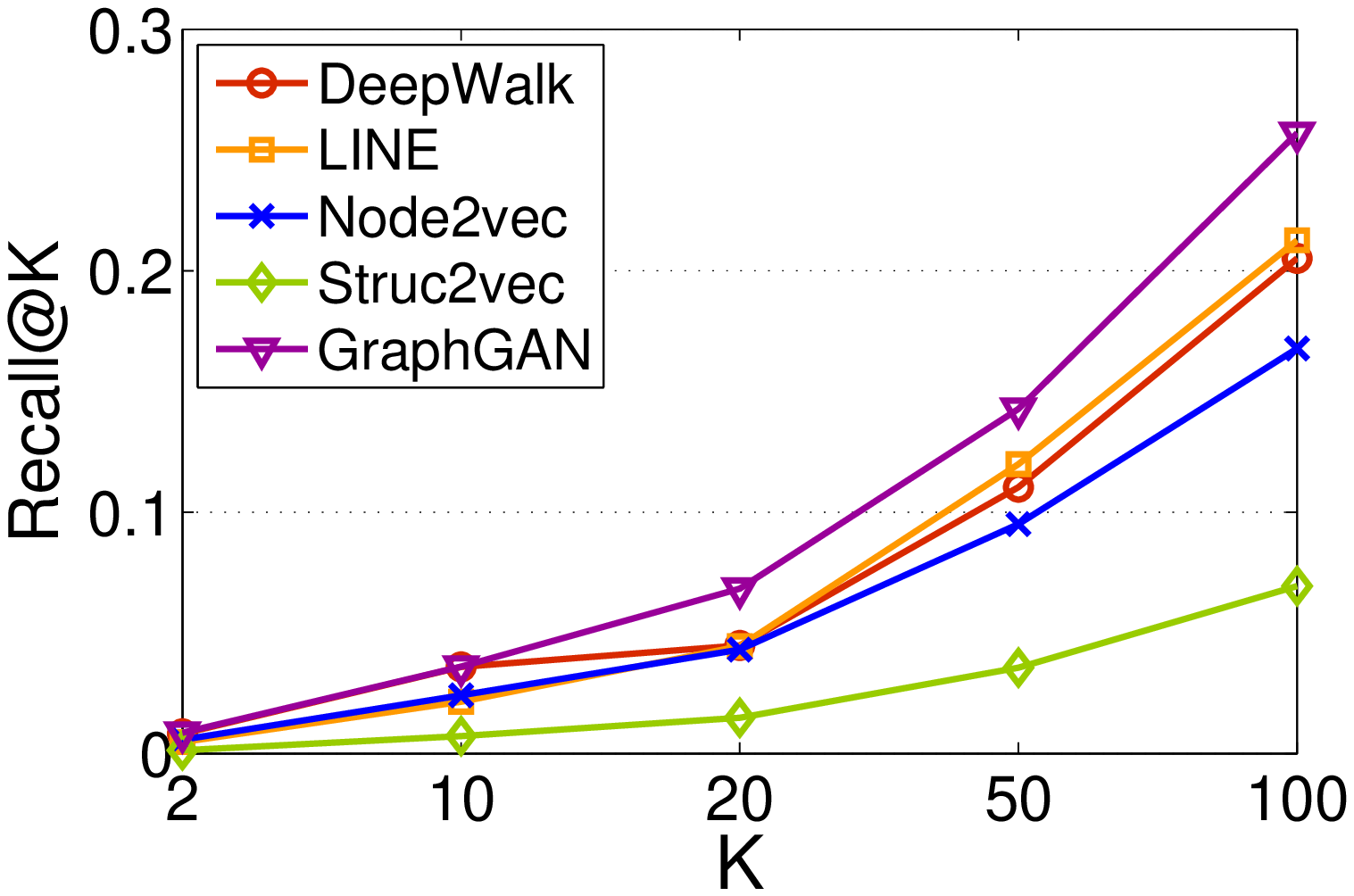}
				\caption{Recall@K}
				\label{fig:recall}
			\end{subfigure}
			\caption{Precision@K and Recall@K on MovieLens-1M in recommendation.}			
			\label{fig:recommendation}
		\end{figure}
		
		We use Movielens-1M as dataset for recommendation.
		For each user, we aim to recommend a set of movies which have not been watched but may be liked by the user.
		We first treat all 4-star and 5-star ratings as edges to obtain a bipartite graph, then randomly hide $10\%$ of edges in the original graph as the test set and construct a BFS-tree for each user.
		Note that different from the above two experiment scenarios where the connectivity distribution is defined on all other vertices for certain vertex, in recommendation the probability of connectivity for one user is only distributed over a fraction of vertices, i.e., all movies, in the graph.
		Therefore, we ``shortcut'' all user vertices in the BFS-tree (except the root) by adding direct edges within all movie pairs that are linked by a user vertex.
		After training and obtaining representations of users and movies, for each user, we select $K$ of his unwatched movies with the highest inner product as the recommendation result.
		The results of \textit{Precision@K} and \textit{Recall@K} are shown in Figure \ref{fig:recommendation}, from which we can observe that GraphGAN is consistently above all baselines, and achieves statistically significant improvements on both metrics.
		Take Precision@20 as an example, GraphGAN outperforms DeepWalk, LINE, node2vec, and struc2vec by $38.56\%$, $59.60\%$, $124.95\%$, and $156.85\%$, respectively.
		Therefore, we can draw the conclusion that GraphGAN maintains a more decent performance in ranking-based tasks compared with other graph representation learning methods.

\section{Conclusions}
	In this paper, we propose \textit{GraphGAN} that unifies two schools of graph representation learning methodologies, i.e., generative methods and discriminative methods, via adversarial training in a \textit{minimax} game.
	Under the GraphGAN framework, both the generator and the discriminator could benefit from each other:
	the generator is guided by the signals from the discriminator and improves its generating performance, while the discriminator is pushed by the generator to better distinguish ground truth from generated samples.
	Moreover, we propose \textit{graph softmax} as the implementation of the generator, which solves the inherent limitations of the traditional softmax.
	We conduct experiments on five real-world datasets in three scenarios, and the results demonstrate that GraphGAN significantly outperforms strong baselines in all experiments due to its adversarial framework and proximity-aware graph softmax.
	%For future work, we plan to examine more choices of the discriminator and apply GraphGAN to more types of real-world graphs (e.g., weighted graphs) by re-designing the corresponding generator.

\section{Acknowledgments}
	This work was partially sponsored by the National Basic Research 973 Program of China under Grant 2015CB352403.

\bibliographystyle{aaai}
\bibliography{reference} 

\begin{thebibliography}{}

\bibitem[\protect\citeauthoryear{Blei, Ng, and Jordan}{2003}]{blei2003latent}
Blei, D.~M.; Ng, A.~Y.; and Jordan, M.~I.
\newblock 2003.
\newblock Latent dirichlet allocation.
\newblock {\em Journal of machine Learning research} 3(Jan):993--1022.

\bibitem[\protect\citeauthoryear{Cao, Lu, and Xu}{2016}]{cao2016deep}
Cao, S.; Lu, W.; and Xu, Q.
\newblock 2016.
\newblock Deep neural networks for learning graph representations.
\newblock In {\em AAAI},  1145--1152.

\bibitem[\protect\citeauthoryear{Chen, Zhang, and
  Huang}{2016}]{chen2016incorporate}
Chen, J.; Zhang, Q.; and Huang, X.
\newblock 2016.
\newblock Incorporate group information to enhance network embedding.
\newblock In {\em CIKM},  1901--1904.
\newblock ACM.

\bibitem[\protect\citeauthoryear{Cormen}{2009}]{cormen2009introduction}
Cormen, T.~H.
\newblock 2009.
\newblock {\em Introduction to algorithms}.
\newblock MIT press.

\bibitem[\protect\citeauthoryear{Denton \bgroup et al\mbox.\egroup
  }{2015}]{denton2015deep}
Denton, E.~L.; Chintala, S.; Fergus, R.; et~al.
\newblock 2015.
\newblock Deep generative image models using a￼ laplacian pyramid of
  adversarial networks.
\newblock In {\em NIPS},  1486--1494.

\bibitem[\protect\citeauthoryear{Dong, Chawla, and
  Swami}{2017}]{dong2017metapath2vec}
Dong, Y.; Chawla, N.~V.; and Swami, A.
\newblock 2017.
\newblock metapath2vec: Scalable representation learning for heterogeneous
  networks.
\newblock In {\em KDD},  135--144.
\newblock ACM.

\bibitem[\protect\citeauthoryear{Gao, Denoyer, and
  Gallinari}{2011}]{gao2011temporal}
Gao, S.; Denoyer, L.; and Gallinari, P.
\newblock 2011.
\newblock Temporal link prediction by integrating content and structure
  information.
\newblock In {\em CIKM},  1169--1174.
\newblock ACM.

\bibitem[\protect\citeauthoryear{Goodfellow \bgroup et al\mbox.\egroup
  }{2014}]{goodfellow2014generative}
Goodfellow, I.; Pouget-Abadie, J.; Mirza, M.; Xu, B.; Warde-Farley, D.; Ozair,
  S.; Courville, A.; and Bengio, Y.
\newblock 2014.
\newblock Generative adversarial nets.
\newblock In {\em NIPS},  2672--2680.

\bibitem[\protect\citeauthoryear{Grover and
  Leskovec}{2016}]{grover2016node2vec}
Grover, A., and Leskovec, J.
\newblock 2016.
\newblock node2vec: Scalable feature learning for networks.
\newblock In {\em KDD},  855--864.
\newblock ACM.

\bibitem[\protect\citeauthoryear{Huang, Li, and Hu}{2017}]{huang2017label}
Huang, X.; Li, J.; and Hu, X.
\newblock 2017.
\newblock Label informed attributed network embedding.
\newblock In {\em WSDM},  731--739.
\newblock ACM.

\bibitem[\protect\citeauthoryear{Li \bgroup et al\mbox.\egroup
  }{2017a}]{li2017semi}
Li, C.; Li, Z.; Wang, S.; Yang, Y.; Zhang, X.; and Zhou, J.
\newblock 2017a.
\newblock Semi-supervised network embedding.
\newblock In {\em International Conference on Database Systems for Advanced
  Applications},  131--147.
\newblock Springer.

\bibitem[\protect\citeauthoryear{Li \bgroup et al\mbox.\egroup
  }{2017b}]{li2017ppne}
Li, C.; Wang, S.; Yang, D.; Li, Z.; Yang, Y.; Zhang, X.; and Zhou, J.
\newblock 2017b.
\newblock Ppne: Property preserving network embedding.
\newblock In {\em International Conference on Database Systems for Advanced
  Applications},  163--179.
\newblock Springer.

\bibitem[\protect\citeauthoryear{Li \bgroup et al\mbox.\egroup
  }{2017c}]{li2017adversarial}
Li, J.; Monroe, W.; Shi, T.; Ritter, A.; and Jurafsky, D.
\newblock 2017c.
\newblock Adversarial learning for neural dialogue generation.
\newblock {\em arXiv preprint arXiv:1701.06547}.

\bibitem[\protect\citeauthoryear{Lin \bgroup et al\mbox.\egroup
  }{2015}]{lin2015learning}
Lin, Y.; Liu, Z.; Sun, M.; Liu, Y.; and Zhu, X.
\newblock 2015.
\newblock Learning entity and relation embeddings for knowledge graph
  completion.
\newblock In {\em AAAI},  2181--2187.

\bibitem[\protect\citeauthoryear{Lindsay}{1995}]{lindsay1995mixture}
Lindsay, B.~G.
\newblock 1995.
\newblock Mixture models: theory, geometry and applications.
\newblock In {\em NSF-CBMS regional conference series in probability and
  statistics},  i--163.
\newblock JSTOR.

\bibitem[\protect\citeauthoryear{Liu \bgroup et al\mbox.\egroup
  }{2016}]{liu2016aligning}
Liu, L.; Cheung, W.~K.; Li, X.; and Liao, L.
\newblock 2016.
\newblock Aligning users across social networks using network embedding.
\newblock In {\em IJCAI},  1774--1780.

\bibitem[\protect\citeauthoryear{Maaten and
  Hinton}{2008}]{maaten2008visualizing}
Maaten, L. v.~d., and Hinton, G.
\newblock 2008.
\newblock Visualizing data using t-sne.
\newblock {\em Journal of Machine Learning Research} 9(Nov):2579--2605.

\bibitem[\protect\citeauthoryear{Mikolov \bgroup et al\mbox.\egroup
  }{2013}]{mikolov2013distributed}
Mikolov, T.; Sutskever, I.; Chen, K.; Corrado, G.~S.; and Dean, J.
\newblock 2013.
\newblock Distributed representations of words and phrases and their
  compositionality.
\newblock In {\em NIPS},  3111--3119.

\bibitem[\protect\citeauthoryear{Morin and
  Bengio}{2005}]{morin2005hierarchical}
Morin, F., and Bengio, Y.
\newblock 2005.
\newblock Hierarchical probabilistic neural network language model.
\newblock In {\em Aistats}, volume~5,  246--252.

\bibitem[\protect\citeauthoryear{Ou \bgroup et al\mbox.\egroup
  }{2016}]{ou2016asymmetric}
Ou, M.; Cui, P.; Pei, J.; Zhang, Z.; and Zhu, W.
\newblock 2016.
\newblock Asymmetric transitivity preserving graph embedding.
\newblock In {\em KDD},  1105--1114.

\bibitem[\protect\citeauthoryear{Perozzi, Al-Rfou, and
  Skiena}{2014}]{perozzi2014deepwalk}
Perozzi, B.; Al-Rfou, R.; and Skiena, S.
\newblock 2014.
\newblock Deepwalk: Online learning of social representations.
\newblock In {\em KDD},  701--710.
\newblock ACM.

\bibitem[\protect\citeauthoryear{Ribeiro, Saverese, and
  Figueiredo}{2017}]{ribeiro2017struc2vec}
Ribeiro, L.~F.; Saverese, P.~H.; and Figueiredo, D.~R.
\newblock 2017.
\newblock struc2vec: Learning node representations from structural identity.
\newblock In {\em KDD},  385--394.
\newblock ACM.

\bibitem[\protect\citeauthoryear{Schulman \bgroup et al\mbox.\egroup
  }{2015}]{schulman2015gradient}
Schulman, J.; Heess, N.; Weber, T.; and Abbeel, P.
\newblock 2015.
\newblock Gradient estimation using stochastic computation graphs.
\newblock In {\em NIPS},  3528--3536.

\bibitem[\protect\citeauthoryear{Tang, Aggarwal, and Liu}{2016}]{tang2016node}
Tang, J.; Aggarwal, C.; and Liu, H.
\newblock 2016.
\newblock Node classification in signed social networks.
\newblock In {\em Proceedings of the 2016 SIAM International Conference on Data
  Mining},  54--62.
\newblock SIAM.

\bibitem[\protect\citeauthoryear{Tang \bgroup et al\mbox.\egroup
  }{2015}]{tang2015line}
Tang, J.; Qu, M.; Wang, M.; Zhang, M.; Yan, J.; and Mei, Q.
\newblock 2015.
\newblock Line: Large-scale information network embedding.
\newblock In {\em WWW},  1067--1077.
\newblock International World Wide Web Conferences Steering Committee.

\bibitem[\protect\citeauthoryear{Tang, Qu, and Mei}{2015}]{tang2015pte}
Tang, J.; Qu, M.; and Mei, Q.
\newblock 2015.
\newblock Pte: Predictive text embedding through large-scale heterogeneous text
  networks.
\newblock In {\em KDD},  1165--1174.
\newblock ACM.

\bibitem[\protect\citeauthoryear{Tian \bgroup et al\mbox.\egroup
  }{2014}]{tian2014learning}
Tian, F.; Gao, B.; Cui, Q.; Chen, E.; and Liu, T.-Y.
\newblock 2014.
\newblock Learning deep representations for graph clustering.
\newblock In {\em AAAI},  1293--1299.

\bibitem[\protect\citeauthoryear{Wang \bgroup et al\mbox.\egroup
  }{2017a}]{wang2017irgan}
Wang, J.; Yu, L.; Zhang, W.; Gong, Y.; Xu, Y.; Wang, B.; Zhang, P.; and Zhang,
  D.
\newblock 2017a.
\newblock Irgan: A minimax game for unifying generative and discriminative
  information retrieval models.
\newblock In {\em SIGIR}.
\newblock ACM.

\bibitem[\protect\citeauthoryear{Wang \bgroup et al\mbox.\egroup
  }{2017b}]{wang2017signed}
Wang, S.; Tang, J.; Aggarwal, C.; Chang, Y.; and Liu, H.
\newblock 2017b.
\newblock Signed network embedding in social media.
\newblock In {\em Proceedings of the 2017 SIAM International Conference on Data
  Mining},  327--335.
\newblock SIAM.

\bibitem[\protect\citeauthoryear{Wang \bgroup et al\mbox.\egroup
  }{2017c}]{wang2017community}
Wang, X.; Cui, P.; Wang, J.; Pei, J.; Zhu, W.; and Yang, S.
\newblock 2017c.
\newblock Community preserving network embedding.
\newblock In {\em AAAI},  203--209.

\bibitem[\protect\citeauthoryear{Wang \bgroup et al\mbox.\egroup
  }{2018}]{wang2018shine}
Wang, H.; Zhang, F.; Hou, M.; Xie, X.; Guo, M.; and Liu, Q.
\newblock 2018.
\newblock Shine: Signed heterogeneous information network embedding for
  sentiment link prediction.
\newblock In {\em WSDM}.
\newblock ACM.

\bibitem[\protect\citeauthoryear{Wang, Cui, and Zhu}{2016}]{wang2016structural}
Wang, D.; Cui, P.; and Zhu, W.
\newblock 2016.
\newblock Structural deep network embedding.
\newblock In {\em KDD},  1225--1234.
\newblock ACM.

\bibitem[\protect\citeauthoryear{Yu \bgroup et al\mbox.\egroup
  }{2014}]{yu2014personalized}
Yu, X.; Ren, X.; Sun, Y.; Gu, Q.; Sturt, B.; Khandelwal, U.; Norick, B.; and
  Han, J.
\newblock 2014.
\newblock Personalized entity recommendation: A heterogeneous information
  network approach.
\newblock In {\em WSDM},  283--292.
\newblock ACM.

\bibitem[\protect\citeauthoryear{Yu \bgroup et al\mbox.\egroup
  }{2017}]{yu2017seqgan}
Yu, L.; Zhang, W.; Wang, J.; and Yu, Y.
\newblock 2017.
\newblock Seqgan: Sequence generative adversarial nets with policy gradient.
\newblock In {\em AAAI},  2852--2858.

\bibitem[\protect\citeauthoryear{Zhang, Barzilay, and
  Jaakkola}{2017}]{zhang2017aspect}
Zhang, Y.; Barzilay, R.; and Jaakkola, T.
\newblock 2017.
\newblock Aspect-augmented adversarial networks for domain adaptation.
\newblock {\em arXiv preprint arXiv:1701.00188}.

\bibitem[\protect\citeauthoryear{Zhou \bgroup et al\mbox.\egroup
  }{2017}]{zhou2017scalable}
Zhou, C.; Liu, Y.; Liu, X.; Liu, Z.; and Gao, J.
\newblock 2017.
\newblock Scalable graph embedding for asymmetric proximity.
\newblock In {\em AAAI},  2942--2948.

\end{thebibliography}

\end{document}